\newtheorem{theorem}{Theorem}
\newtheorem{definition}{Definition}
\newtheorem{assumption}{Assumption}
\newcommand{\Sec}[1]		{Sec.\,\ref{#1}}
\newcommand{\Fig}[1]		{Fig.\,\ref{#1}}
\newcommand{\Eq}[1]			{Eq.\,\ref{#1}}
\newcommand{\Theorem}[1]{Theorem~\ref{#1}}
\newcommand{\Definition}[1]{Definition~\ref{#1}}
\newcommand{\Experiment}[1]{Exp.\,#1}
\newcommand{\ie}   			{i.e.\xspace}
\newcommand{\eg}   			{e.g.\xspace}
\newcommand{\etc}   		{etc.\xspace}
\newcommand{\iid}   		{i.i.d.\xspace}
\newcommand{\Exp}    		{\mathbbm{E}}
\newcommand{\real}      {\mathbbm{R}}
\newcommand{\Prob}      {\mathbbm{P}}
\newcommand{\mydots} 	{...}
\newcommand{\inlinetitle}[2]  {\noindent\textbf{\emph{#1}{#2}}}
\newcommand{\International} {Intern.\xspace}
\newcommand{\Journal} 			{J.\xspace}
\newcommand{\Conference} 		{Conf.\xspace}
\newcommand{\Symposium} 		{Symp.\xspace}
\newcommand{\Transactions} 	{Trans.\xspace}
\newcommand{\Proceedings} 	{Proc.\xspace}
\def\x{{\mathbf x}}
\def\S{{\mathcal S}}
\newcommand{\MyTitle} {A Probabilistic Framework to Node-level Anomaly Detection in Communication Networks}
\begin{document}

\title{\MyTitle}
\author{\IEEEauthorblockN{Batiste Le Bars\IEEEauthorrefmark{1}\IEEEauthorrefmark{2} \qquad Argyris Kalogeratos\IEEEauthorrefmark{1}}
\vspace{5pt}
\IEEEauthorblockA{\IEEEauthorrefmark{1}Center of Applied Maths, ENS Cachan, CNRS, University Paris-Saclay, France\\
\IEEEauthorrefmark{2}Sigfox R\&D, Paris, France}
\vspace{5pt}
Emails: \texttt{\{lebars,\, kalogeratos\}@cmla.ens-cachan.fr}
}

\maketitle

\begin{abstract}
In this paper we consider the task of detecting abnormal communication volume occurring at node-level in communication networks. The signal of the communication activity is modeled by means of a \emph{clique stream}: each occurring communication event is instantaneous and activates an undirected subgraph spanning over a set of equally participating nodes. We present a probabilistic framework to model and assess the communication volume observed at any single node. Specifically, we employ non-parametric regression to learn the probability that a node takes part in a certain event knowing the set of other nodes that are involved. On the top of that, we present a concentration inequality around the estimated volume of events in which a node could participate, which in turn allows us to build an efficient and interpretable anomaly scoring function. Finally, the superior performance of the proposed approach is empirically demonstrated in real-world sensor network data, as well as using synthetic communication activity that is in accordance with that latter setting.
\end{abstract}

\begin{IEEEkeywords}
Anomaly detection, probabilistic models, communication networks, sensor networks, internet-of-things, link streams, graph signals.
\end{IEEEkeywords}

\section{Introduction}
Monitoring the activity in communication networks has become a popular area of research and particular attention has been paid to detection tasks such as spotting events or anomalies. An effective way to represent the communication activity is via a dynamic graph where the entities are considered to be nodes, and each communication event (or more simply \emph{event}) to be represented by a set of connecting edges that appear at a specific time interval. Multiple occurring events over time may be seen as a \emph{link stream} \cite{latapy2017stream} with fast creation and deletion of edges. The use of this representation is mainly motivated by the fact that, in reality, content-specific features of the communicated messages are usually kept undisclosed so as to preserve privacy. Consequently, most studies on activity monitoring merely deal with linkage information, \ie who communicated with whom and at which time; the body of work on anomaly detection is not an exception.

The anomaly detection task on graph-related activity can refer to the {node-,} the subgraph-, or the whole graph-level \cite{ranshous2015anomaly}. To the best of our knowledge, the existing methods consider \emph{time-aggregated} representations of the dynamic graph. It has been proposed to work with time-series of static graphs, each of them summarizing the link stream during a time interval. In other words, each edge weight of a static graph is a function of the number of events occurring between two nodes during that time interval. Modeling the weights' evolution with counting processes \cite{heard2010bayesian,corneli2017multiple} is among the standard approaches. The main drawback of any aggregated representation is that it neglects events that involve more than two nodes (\eg multiple receivers). Besides, a common limitation of the existing literature is the assumption that the communication volume is generated by a stationary underlying distribution.

In this work, we focus on the detection of \emph{abnormal communication volume at node-level}, which is particularly interesting as a change in the behavior of a node may reveal various types of abnormality (\eg account hack, antenna breakdown, \etc). 
We put forward a \emph{content-agnostic} approach supposing access solely to the linkage information observed at each event, that is the set of the involved nodes. The conceptual novelty of our approach is that, contrarily to our predecessors that use time-aggregated representations, we model the activity as a \emph{clique stream}. We track each event independently, we consider it to be instantaneous and thus to activate an undirected subgraph spanning over the set of equally participating nodes, \ie there are no special roles such as sender and receivers. Hence, we can represent each event with a binary \emph{fingerprint} indicating the involved nodes. Subsequently, we propose to statistically model and infer the probability that a node takes part in an event, knowing the observed event fingerprint indicating the other participating nodes. The assumption is that there is a pattern in the fingerprints of the events in which a node participates. This pattern results from the underlying network structure since it is natural for subsets of neighboring nodes to participate frequently together in events. 

This modeling allows us to derive confidence levels for the communication volume to which a node participates in a time interval. Our detection approach has two strong aspects. First, it allows the time-series of the node's communication volume to be non-stationary, since it only assumes regularity in the corresponding event fingerprints. Specifically, knowing the fingerprint of an event for all nodes but for a reference node, then the conditional probability that this node takes part in that event is constant over time, whereas the marginal probability that the node could participate in the event is not necessarily constant. Second, the anomaly score that our approach outputs is easily interpretable as it is simply based on the prediction error of a regression function.

\section{Related Work}\label{sec:related_works}

In the literature, the existing detection methods for abnormal node communication volume mostly analyze a time-aggregated representation of the actual dynamic graph of communication activity. This implies a time-series of static graphs $\{A_t\}_{t=1}^{T}$, where $A_t \in \real^{N\times N}$ is the weighted adjacency matrix representing all the shared communication events between pairs of nodes at in the time interval $t\in\{1,\mydots,T\}$, and $N$ is the total number of nodes in the network. Most methods do not consider self-edges and therefore require each $A_t$ to have zero diagonal. Since these methods consider node-to-node communication events, note that the weighted degree of a node according to $A_t$ gives also the total number of events observed at a node in the time interval $t$. The multivariate time-series of the total number of events occurring in the network over time can be written as $\{M_t\}_{t=1}^{T}$. This is the variable of our interest which we would like to know when it gets abnormal values.

A feature-based approach for detecting anomalies in such time-series of graphs, is to compute several graph features for each $A_t$, such as the node degree or centrality, and then apply standard anomaly detection techniques on the derived multivariate time-series of these features \cite{cheng2009detection,gupta2014outlier,chandola2009anomaly}. 
More generally, the literature of anomaly detection in time-series of graphs varies in three aspects:
\begin{itemize}
\item \emph{Availability of data labels}: Semi-supervised (access to a dataset of \emph{normal} system operation) \cite{scholkopf2001estimating,scott2006learning,DI20101910} or unsupervised (no label available) \cite{breunig2000lof, huang2007network}.
\item \emph{Type of the utilized method}: Probabilistic model-based \cite{priebe2005scan,wang2014locality,peel2015detecting,corneli2017multiple,aggarwal2011outlier,neil2013scan,heard2010bayesian}, distance based \cite{breunig2000lof}, decomposition-based \cite{sun2007less,kolda2008scalable}, compression-based \cite{huang2007network}, \etc
\item \emph{Scale of abnormality}: Node/Edge-level \cite{ji2013incremental,heard2010bayesian}, subgraph-level \cite{neil2013scan}, or whole network-level \cite{corneli2017multiple}.
\end{itemize}
The reader should refer to \cite{ranshous2015anomaly} for a more detailed survey on anomaly detection in dynamic graphs.

As in \cite{heard2010bayesian,pincombe2005anomaly,wan2009link}, our work assumes a semi-supervised setting and proposes a model-based approach for node-level anomaly detection. Moreover, as in \cite{corneli2017multiple}, graph edges are considered to be undirected, and each event to be shared by two or more nodes without distinguishable roles (\eg sender and receivers).

In \cite{latapy2017stream}, the \emph{link stream} framework is presented for the representation of a dynamic graph as a stream where edges are being created and removed. Therein, the nodes are assumed to be fixed and the dynamics affect only the edges between them. An edge is characterized by a triplet $(S,u,v)$ noting two communicating nodes $u$, $v$, and a time interval $S$ which is not necessarily continuous (may even be a union of non-contiguous time intervals). In this work, we adopt this stream framework. In particular, as we will see, we represent the activity as a clique stream, and $S$ is always a finite union of singletons as edges appear instantaneously. 

\section{Model Description and Methodology}\label{sec:model}

\subsection{The model}\label{sec:the_model}

Let a communication network have $N$ inter-connected entities, referred to as nodes. 
In terms of notation style, we differentiate a random from an observed variable (respectively vector) with uppercase and lowercase (respectively bold) letters. Moreover, let $|\cdot|$ denote the size of the input set.

\begin{definition} \label{def:event}
(Communication event): A communication event $e=(\tau_e,X_e)$ is denoted by a tuple of $N+1$ elements, which contains the timestamp $\tau_e$ at which the event occurred and its fingerprint $X_e$.
\end{definition} 

\begin{definition} \label{def:fingerprint}
(Event fingerprint): The fingerprint of an event $e$ is an $N$-dimensional binary vector $X_e\in \{0,1\}^N$, where $X_e^{(j)}=1$ if node $j$ is involved in the event, and $0$ otherwise.
\end{definition} 

Note that the involvement of a node in an event implies its participation regardless its communication role (\eg sender or receiver). From a probabilistic point of view, a fingerprint follows a multivariate Bernoulli distribution. From a graph point of view, we can see that each event creates a \emph{clique} with all the involved nodes (see an example in \Fig{fig:g}). Formally, a clique is defined as a subset of nodes of the graph that are all pairwise adjacent. Therefore, we regard the communication activity as a \emph{clique stream}, and each clique appears instantaneously as events have no duration.

\begin{definition} \label{def:linkStream}
(Event stream): An event stream $\S=\{(\tau_s,X_s)\}_{s=1}^{n}$ is a sequence of $n\triangleq|\S|$ communication events each creating a clique among the involved nodes. We write as $\S_t\subset \S$ the sub-stream with the events that occurred in a certain time interval $t$, and $n_t\triangleq|\S_t|$.
\end{definition} 

\begin{assumption}
The communication events are considered to be independent. The total number of events $n$ recorded during an event stream $\S$ is considered to be deterministic.
\end{assumption}

Let us consider a time interval $t$ and the associated event stream $\S_t$ consisting of its $n_t$ recorded events. Let the event realizations be denoted by $\{\x_i\}_{i=1}^{n_t}$, where $\forall i, \x_i\in \{0,1\}^N$. Also, let $M_t^{(j)}= \sum_{i=1}^{n_t}X_i^{(j)}$ be the number of events recorded at node $j\in \{1,\mydots,N\}$ over the time interval $t$. 

For a given node $j$ and time interval $t$, the goal of our method is to be able to decide if the volume of events in which that node participates is abnormal. To solve this problem, the main idea is to provide confidence levels for $M_t^{(j)}$ based on the fingerprints collected from events of the neighboring nodes. This way, an anomaly can be simply spotted whenever the observed value of $M_t^{(j)}$ lies out of the confidence level.

\begin{definition} \label{def:cpf}
(Conditional probability function): 
Let $\x^{(-j)}$ be the fingerprint of the event $X$ that indicates the participation of all nodes except from node $j$. Then, we define as $\eta^*_j(\x^{(-j)})$ the probability that node $j$ participates in the event $X$, provided the fingerprint $\x^{(-j)}$:

\begin{align}
\eta^*_j(\x^{(-j)})&\triangleq\Prob(X^{(j)}=1|X^{(-j)}=\x^{(-j)}) \\
&=\Exp\left[X^{(j)}|X^{(-j)}=\x^{(-j)}\right].
\end{align} 
\end{definition} 
Knowing the fingerprint over all the other nodes allows us to express the behavior of node $j$ as a Bernoulli random variable: 
\begin{equation}\label{eq:XjBernoulli}
X^{(j)}\sim \mathcal{B}\left(\eta^*_j(\x_i^{(-j)})\right).
\end{equation}

Concerning a sub-stream $\S_t$ and the number of events recorded at node $j$ therein, we can note that $M_t^{(j)}$ is a sum of Bernoulli distributions and, thus, we can use concentration inequalities \cite{boucheron2013concentration}, such as Chernoff's or Hoeffding's \cite{hoeffding1963probability}, to derive confidence levels. Below, we apply the \emph{bilateral Hoeffding's inequality} to our case:
\begin{equation}
\label{eq:hoeffding}
\begin{split}
\!\!\!\!\!\Prob \Bigg(\left|M_t^{(j)}-\mu^*\right| \geq \varepsilon \ \ \Bigg| \ \ \forall i=1\ldots n_t &, \quad X_i^{(-j)}=\x_i^{(-j)} \Bigg) \\
&\ \ \, \leq 2 \exp \left(-\frac{2\varepsilon^2}{n_t}\right)
\end{split}
\end{equation}
with $\mu^*=\Exp\left[M_t^{(j)} | X^{(-j)} = \x^{(-j)}\right] = \sum_{i=1}^{n_t}\eta^*_j(\x_i^{(-j)})$.
Using this inequality and, as mentioned earlier, knowing the event fingerprints of all other nodes, we have with probability at least $1-\delta$:
\begin{equation}\label{eq:chernoff}
\left|M_t^{(j)}-\mu^*\right| \leq \sqrt{\frac{n_t\log(2/\delta)}{2}}.
\end{equation} 
This equation provides, with high probability (as $\delta$ is close to $0$), a good confidence interval for $M_t^{(-j)}$.

\subsection{Methodology}\label{subsec:methodo}

Suppose we observe the sub-stream $\S_t$ and the associated $n_t$ fingerprints. Let $x_i^{(j)}$ be the observed version of $X_i^{(j)}$ indicating if node $j$ participated in the event $i$ or not, and $m_t^{(j)}=\sum_{i=1}^{n_t}x_i^{(j)}$ be the observed version of $M_t^{(j)}$. If we suppose the access to $\eta_j^*$ (\ie the true conditional probability for node $j$), then an intuitive \emph{anomaly score} for the $m_t^{(j)}$ is:
\begin{equation}\label{eq:anomaly_score}
\rho_t^j = 2\,\exp\left(-\frac{2\,(m_t^{(j)}-\mu^*)^2}{n_t}\right)\!.
\end{equation}
This score is obtained by replacing $\varepsilon$ with $(m_t^{(j)}-\mu^*)$ in the right-hand side of \Eq{eq:hoeffding}. Relating to the statistical hypothesis testing theory, this score can be seen as an upper bound on the {$p$-value}. Then, a threshold $\alpha$ can be set, conventionally $0 < \alpha \leq 0.05$, to detect anomalies. More specifically, an anomaly is detected when $\rho_t^j < \alpha$. Note that this method is equivalent to replacing $\delta$ with the chosen threshold value, and then checking if $(m_t^{(j)}-\mu^*)$ falls out of the confidence interval in \Eq{eq:chernoff}. Bear also in mind that, since this method provides an upper bound on the {$p$-value}, it is in fact more conservative than in the standard statistical testing. Indeed, the confidence intervals built with \Eq{eq:chernoff} are larger than the confidence intervals that correspond to probability exactly equal to $\alpha$. 

In practice, we cannot have access to the true conditional probability functions $\eta^*_j(\cdot)$ which need to be estimated. To this end, we suppose that we have access to a training data stream $\S_0=\{(\tau_i^0,X^0_i)\}_{i=1}^{n_0}$ which is an event stream recorded at times of normal communication behavior for all nodes. With our definition of $\eta^*_j(\cdot)$ (\Definition{def:cpf}), the estimation problem refers to the task of estimation of conditional probabilities. However, since we deal with a Bernoulli random variable, the problem actually becomes a regression of the unknown function $\eta^*_j:\{0,1\}^{N-1}\mapsto [0,1]$, which can be performed using the previous normal dataset $\S_0$.

In this work we do not discuss the regression procedure, but we still need to note that non-parametric methods do seem suitable. Indeed, the estimation of the conditional distributions for every possible combination of fingerprints would lead to the estimation of $N(2^{N-1}-1)$ parameters. Note also that the Binary Tree or Random Forest regression algorithms seem well-adapted to this setting since the explanatory variables are binary. 
Let $\widehat{\eta_j}(\cdot):=\widehat{\eta_j}(\cdot ; X_1^0,\ldots,X_{n_0}^0)$ be our regressor. The first anomaly detection method one can think of is the simple `\textbf{\emph{plug-in}}' \textbf{method}: 
\begin{itemize}
\item fix $\delta$;
\item replace $\eta^*_j$ by $\widehat{\eta_j}$ in \Eq{eq:chernoff};
\item use \Eq{eq:chernoff} to obtain confidence levels for $M_t^{(j)}$.
\end{itemize}
\inlinetitle{Remark}{.} In practice, fixing $\delta$ is not trivial and simply taking a value below $0.05$ could lead to bad results. One way to fix $\delta$ is via cross-validation on the training stream. To do so, one should fix an acceptable false positive rate (\eg a standard value is $0.05$), then via cross-validation find the value of $\delta$ that generates a false positive rate lower than that fixed value.

However, \Eq{eq:hoeffding} is not true for the estimated version of $\eta_j^*$ and we must provide a concentration inequality around the estimated expectation $\widehat{\mu}=\sum_{i=1}^{n_t}\widehat{\eta}^{}_j(\x_i^{(-j)})$. In the following subsection, we give an asymptotic concentration inequality around our predicted number of shared events.

\subsection{Model-free prediction intervals}\label{subsec:thme}

\begin{theorem} \label{THME}
Let $\S_0$ be the training (normal) event stream for which we assume that $\forall i=1,\ldots,n_0$, $X^0_i \underset{\iid}{\sim} \Prob_{\!\!X^0}$. Let $\S_t$ be another stream for which the distribution $\Prob_{\!\!X}$ may be different but having the same support. 
Assume that both distributions have the same conditional probability function (\Definition{def:cpf}). 

\noindent
Assume our estimator $\widehat{\eta_j}$ is weakly consistent \textup{\cite{gyorfi2006distribution}}, 
and $\forall i=1,\ldots,n_0$,
\begin{equation}
\label{eq:bounded_diff}
\begin{split}
\underset{x_i,x'_i\in\{0,1\}^{N-1}}{\max}|\widehat{\eta}_j(&x;\ x_1,\ldots,x_i,\ldots,x_{n_0}) \\ 
 & \!\!\!\!\!-\widehat{\eta}_j(x;\ x_1,\ldots,x'_i,\ldots,x_{n_0})| \leq \kappa(n_0),
\end{split}
\end{equation}
where, $\kappa$ tends to $0$ when $n_0$ tends to infinity such that $n_0\kappa^2(n_0)\underset{n_0\rightarrow \infty}{\longrightarrow} 0$.
Then, we have :
\begin{equation}
\label{eq:concentration_min}
\begin{split}
\!\!\!\!\underset{n_0\rightarrow\infty}{\lim}\Prob\Bigg(\Bigg|&M^{(j)}_t - \sum_{i=1}^{n_t}\widehat{\eta}_j(X^{(-j)}_i;\ X_1^0,\ldots,X_{n_0}^0)\Bigg|>s\Bigg) \\
&\!\!\!\!\!\!\!\!\!\!\!\!\!\!\!\!\!\!\!\! \leq \underset{k\in[0,s]}{\min} \left\{2\exp\left(-\frac{2k^2}{n_t}\right)+2\exp\left(-\frac{(s-k)^2}{2n_t}\right)\right\}.
\end{split}
\end{equation}
\end{theorem}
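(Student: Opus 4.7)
The plan is to split $\{|M_t^{(j)} - \widehat{\mu}| > s\}$ via the triangle inequality: for any $k\in[0,s]$,
\[
\{|M_t^{(j)} - \widehat{\mu}| > s\} \subseteq \{|M_t^{(j)} - \mu^*| > k\}\cup\{|\mu^* - \widehat{\mu}| > s-k\},
\]
so that a union bound produces two summands and the minimum over $k\in[0,s]$ comes at the end by optimizing this free split.

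The first probability $\Prob(|M_t^{(j)} - \mu^*| > k)$ follows directly from (\ref{eq:hoeffding}), which holds uniformly in the conditioning fingerprints $X_i^{(-j)}$ and thus, after integrating them out, yields the unconditional bound $2\exp(-2k^2/n_t)$, matching the first term of the claimed minimum. For the second probability I would apply McDiarmid's bounded-differences inequality to
\[
Y := \sum_{i=1}^{n_t}\left[\widehat{\eta}_j(X_i^{(-j)};X_1^0,\ldots,X_{n_0}^0)-\eta^*_j(X_i^{(-j)})\right],
\]
viewed as a function of the $n_t+n_0$ independent inputs $(X_1,\ldots,X_{n_t},X_1^0,\ldots,X_{n_0}^0)$. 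Since $\widehat{\eta}_j,\eta^*_j\in[0,1]$, replacing a single test vector $X_i$ alters $Y$ by at most $2$; by the bounded-difference hypothesis (\ref{eq:bounded_diff}), replacing a single training vector $X_l^0$ alters each of the $n_t$ summands by at most $\kappa(n_0)$, hence $Y$ by at most $n_t\kappa(n_0)$. McDiarmid then yields
\[
\Prob(|Y - \Exp[Y]| > \varepsilon) \leq 2\exp\!\left(-\tfrac{2\varepsilon^2}{4n_t + n_0 n_t^2 \kappa^2(n_0)}\right),
\]
and the hypothesis $n_0\kappa^2(n_0)\to 0$ makes the right-hand side converge to $2\exp(-\varepsilon^2/(2n_t))$ as $n_0\to\infty$; setting $\varepsilon = s-k$ produces the second claimed term, provided $\Exp[Y]$ vanishes in the limit.

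It remains to argue that $\Exp[Y]\to 0$. By independence of test and training and linearity, $\Exp[Y]$ is a finite sum, indexed by the discrete set $\{0,1\}^{N-1}$, of terms $\Exp_{\!\text{train}}[\widehat{\eta}_j(x;\cdot)]-\eta^*_j(x)$ weighted by the test marginal. Weak consistency of $\widehat{\eta}_j$ with respect to $\Prob_{\!\!X^0}$, combined with the finiteness and commonness of the support, forces pointwise convergence in probability $\widehat{\eta}_j(x;\cdot)\to\eta^*_j(x)$ at each $x$; bounded convergence then promotes this to convergence of expectations, giving $\Exp[Y]\to 0$. Assembling the three bounds, passing to the limit $n_0\to\infty$, and finally minimizing over $k\in[0,s]$ closes the argument. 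The main obstacle is this last step: converting weak consistency (an $L^2$-type statement under the training marginal) into convergence of $\Exp[Y]$ under the (possibly different) test marginal, which is exactly what the common-support assumption is there to guarantee.
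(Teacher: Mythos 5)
Your proposal is correct and follows essentially the same route as the paper's proof: the same triangle-inequality split with a free parameter $k$, Hoeffding for the $|M_t^{(j)}-\mu^*|$ term, McDiarmid with exactly the same bounded-differences constants ($2$ per test point, $n_t\kappa(n_0)$ per training point, hence the denominator $4n_t+n_0n_t^2\kappa^2(n_0)$), and absorption of the vanishing bias $\Exp[Y]$ before passing to the limit. The only cosmetic difference is how you show $\Exp[Y]\to 0$ — you exploit the finiteness of the common support to get pointwise convergence, whereas the paper uses a Cauchy--Schwarz change of measure from $\Prob_{\!\!X}$ to $\Prob_{\!\!X^0}$ — but both hinge on the same weak-consistency and common-support hypotheses.
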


\begin{proof} A sketch follows; the complete proof is provided in the Appendix. The successive use of the triangle, the Cauchy-Schwarz and the Jensen inequalities allows us to upper-bound 
$\Big|M^{(j)}_t - \underbrace{\sum_{i=1}^{n_t}\widehat{\eta}_j(X^{(-j)}_i;\ X_1^0,\ldots,X_{n_0}^0)}_{=\widehat{\mu}}\Big|$ by:
\begin{align*}
\begin{split}
\underbrace{\left|M_t^{(j)}-\mu^*\right|}_{(i)}&+\underbrace{\Big|\mu^*-\widehat{\mu}-\Exp\Big[\mu^*-\widehat{\mu}\Big]\Big|}_{(ii)} \\
&\ \ \, \ \, +n_t\underbrace{\Exp\Big[(\eta^*(X)-\widehat{\eta}(X))^2\Big]}_{(iii)}.
\end{split}
\end{align*}  
Since $(iii)$ tends to $0$ as $n_0$ tends to infinity, due to the consistency assumption, we can bound the left-hand side of inequality (\ref{eq:concentration_min}) by:
\begin{equation*}
\underset{k\in[0,s]}{\min}\Bigg\{\Prob\Big((i)>k\Big)+\underset{n_0\rightarrow\infty}{\lim}\Prob\Big((ii)>s-k\Big)\Bigg\}.
\end{equation*}
Applying Hoeffding's inequality on the first element of the sum, and McDiarmid's inequality on the second one, leads to the final result.

\end{proof}

\noindent
\textbf{Remark.} Replacing $k$ by $\frac{s}{3}$ in the final inequality given by \Theorem{THME}, we obtain:
\begin{equation}
\label{eq:concentration_pretty}
\begin{split}
\!\!\!\!\!\!\underset{n_0\rightarrow\infty}{\limsup}\Prob\Bigg(\Bigg|M^{(j)}_t - \sum_{i=1}^{n_t}\widehat{\eta}_j(X^{(-j)}_i&;\ X_1^0,\ldots,X_{n_0}^0)\Bigg|>s\Bigg) \\
&\ \ \, \ \, \leq 4\exp\left(-\frac{2s^2}{9n_t}\right).
\end{split}
\end{equation}

\inlinetitle{Remarks on \Theorem{THME}}{.} 
First of all, as mentioned in the first part of the theorem, the training and test event streams may follow different probability distributions. This is very interesting since, in practice, $M_t^{(j)}$ is a non-stationary time-series: \ie proportion of events in which a node is involved in is not stationary over time. However, we assume that, while in normal state, the probability that a node participates in an event, knowing the participation of the other nodes, does not change over time. From the network viewpoint, this means that the underlying graph structure, on which the events are dynamically created, does not change in that time as well.

Therefore, provided that all hypotheses are verified, we test whether the $\eta$ function has changed between the training and the test event streams; we test the \emph{stationarity of the conditional distributions}. Falling out of the confidence intervals (built with \Eq{eq:concentration_min} or \eqref{eq:concentration_pretty}) would indicate a significant change in the conditional probability. Consequently, a property of this method is that it enables the detection of changes in the activity level of a node, having as reference the activity of the other nodes in its close communication environment.

Besides real anomalies, one reason for our statistical test to see the observed communication volume to fall out of the confidence intervals is when the assumptions are not verified. The consistency may has not been reached yet, which means that the number of training samples is not large enough. The other reason may be that the support of the distribution has changed (\eg nodes sharing events for the first time), which is however important to be able to detect as well.

The consistency assumption is pretty typical for a regression framework. The reader may refer to the large literature that deals with this question \cite{gyorfi2002principles, gyorfi2006distribution} in which it has been shown that many regressors are consistent. As clarified earlier, this work does not aim to provide a new regression method, however, we must note that our method largely depends on the convergence rate of the estimator.

The last assumption we need to analyze is the bounded difference of \Eq{eq:bounded_diff}. In simple words, it says that when the size of the training set increases, a change of one sample does not affect much the estimated regression function. The second hypothesis, $n_0\,\kappa^2(n_0)\underset{n_0\rightarrow \infty}{\longrightarrow} 0$, is less intuitive. Nonetheless, for many estimators, $\kappa(n_0)=\mathcal{O}(\frac{1}{n_0})$ and thus the hypothesis holds. As an example, take the Nadaraya-Watson regressor \cite{gyorfi2006distribution}. Let here $K$ be the kernel function and $h$ the bandwidth. We then have $\kappa(n_0)=\frac{1/n_0}{\frac{1}{n_0}\sum_{i=1}^{n_0}K_h(X_i^{(-j)}-x)}=\mathcal{O}(\frac{1}{n_0})$, since $\frac{1}{n_0}\sum_{i=1}^{n_0}K_h(X_i^{(-j)}-x)$ converges to $\Exp[K_h(X^{(-j)}-x)]$.

\section{Experiments}\label{sec:experiments}

\subsection{State-of-the-art competitors}\label{sec:competitors}

For our comparative evaluation, we rely on the anomaly detection literature for dynamic graph (see \Sec{sec:related_works}). We choose state-of-the-art methods from the literature which, to the best of our knowledge, are the only existing works on the probabilistic anomaly detection at node-level, and hence are natural competitors to our work. They use the aggregated representation of the dynamic graph (see \Sec{sec:related_works} and \Fig{fig:g}). We set the aggregation's time-scale to one day, hence the edge weight between two nodes at a time interval corresponds to the number of events shared by those two nodes in that interval.

\inlinetitle{Heard's method}{} \cite{heard2010bayesian} consists in fitting, either sequentially (based on all past values) or retrospectively (based on all values but the one to predict), an homogeneous counting process on each edge of the graph independently. However, rather than focusing on edges, here we decided to model the total number of messages received per day by each node. We chose a retrospective fitting, as the number of studied timestamps is not large enough for efficient sequential fitting.

\inlinetitle{The Scan Statistics-based method}{} in \cite{wang2014locality}, at each timestamp, builds a statistic on the neighborhood around the node of interest, and normalizes it using past values in a time-window. The normalized statistic is used directly as an anomaly score. In our experiments, we used a statistic of order $0$, specifically, the weighted degree of the node of interest. With our aggregated graph construction, this corresponds to the sum of weights of the adjacent edges.

\inlinetitle{Anomaly scores}{.} We can build two anomaly scores. The first one, referred to as \emph{bilateral}, increases when the observed value is `far' from the expected one, in terms of absolute value. For our method, that simply corresponds to the score described in \ref{subsec:methodo}, \ie the \Eq{eq:anomaly_score} taken negatively so that it increases with the deviation from what is expected. 

The second anomaly score, referred to as \emph{unilateral}, is motivated by the fact that in telecommunication networks an interesting type of anomalous behavior is when a node has an abnormal low level of received messages. That may reflect an antenna breakdown. For this reason, the anomaly score should increase only when the observed value is lower than expected. For our method, we simply take $\rho = -\exp(\frac{-2\,(\widehat{\mu}-m_t^{(j)})}{ 9n_t})$.

\subsection{IoT dataset}\label{sec:sigfox_data}

The first results are obtained from a real industrial setting that concerns Sigfox, a telecommunication operator specialized on sensor and \emph{Internet-of-Things} (IoT) networks
\footnote{The datasets and our implementation of all compared anomaly detection methods are publicly available at \texttt{http://kalogeratos.com/psite/nad2019}.}. 
Networks like these are dedicated to cover objects or devices that need to exchange only little information with users avoiding standard transmission protocols (such as WiFi, 4G or Bluetooth) that may not be well-adapted to the operational constraints (\eg for low energy consumption). When a sensor needs to transmit a message, it simply sends a signal which can be received by several nearby Base Stations (BSs) that are in reachable distance. Our objective is to detect abnormal volume of received signals observed at any BS during a day. Hence, we consider that each sent message corresponds to a single event whose fingerprint spans only over the set of receiving BSs of the network. The value of each dimension of the event fingerprint indicates whether the message has been received or not by the corresponding BS ($1$ or $0$, respectively).

\begin{figure}[!t]
\includegraphics[scale=0.43, viewport=0 23 600 440, clip=true]{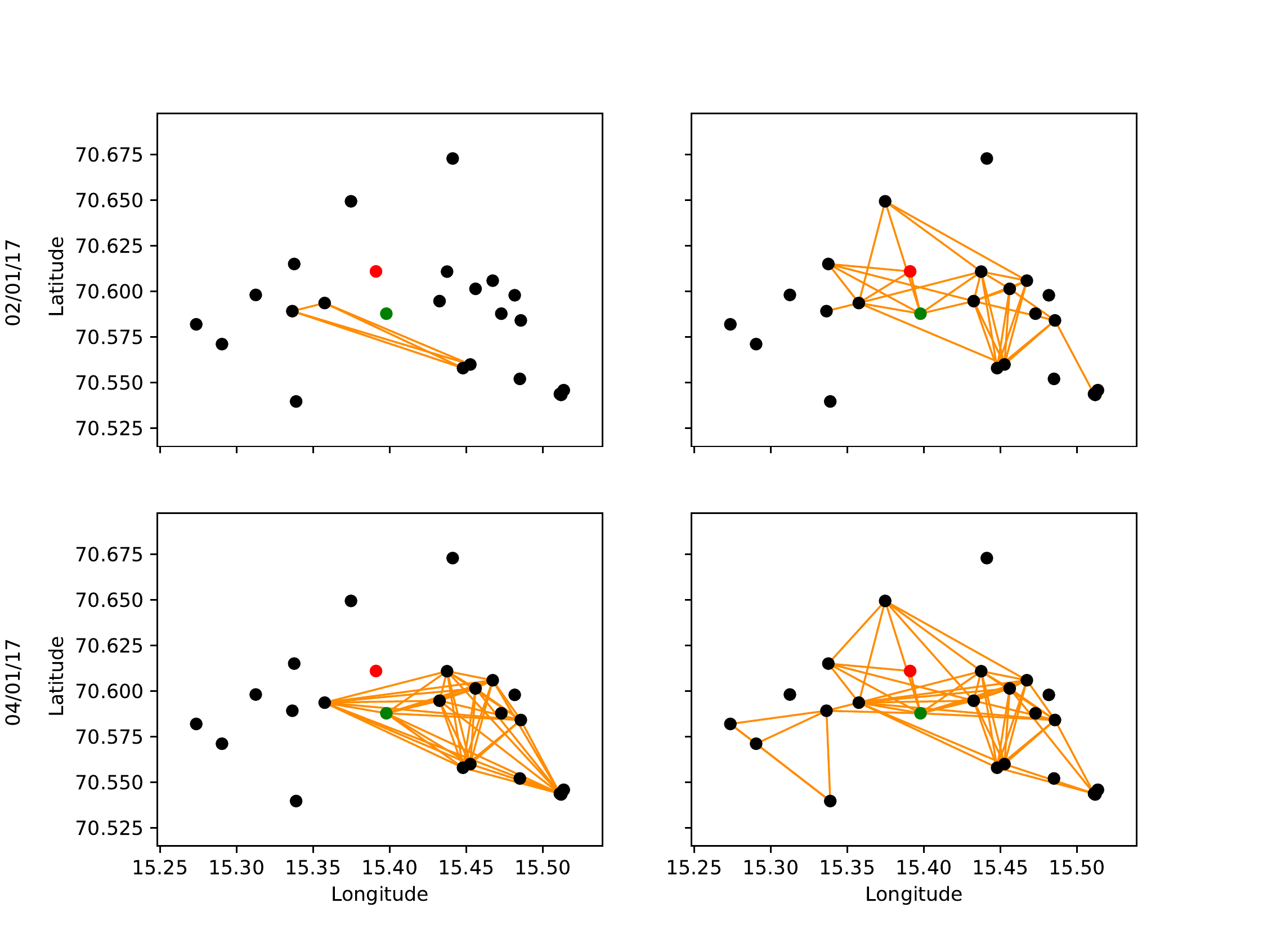}
\caption{High-resolution and aggregated representations (columns) of communication activity in a part of the considered Sigfox IoT network, during two consecutive days (rows). Each node corresponds to a Base Station (BS). The red BS has been tagged as \emph{anomalous} by experts and presents \emph{abnormal} behavior at some point during he observation time, while the green BS is taken as reference of \emph{normal} behavior. \textbf{Left column:}~Each graph represents a single event that occurred at the day indicated on the left. The involved nodes form a clique in the network. \textbf{Right column:}~Each graph is an aggregated representation of all the events of the respective day. A link is drawn when two nodes share more than 30\% of the total number of messages they received during that day.}
\label{fig:g}
\end{figure}

In this evaluation study we use the event stream recorded at a subset of $34$ BSs over a period of $5$ months. \Fig{fig:g} shows the relative geographical locations of the BSs. Each BS is a node in the considered graph representation and each event creates instantaneously a clique in the graph (\eg see the left column of \Fig{fig:g}) among the involved nodes that all receive the same message, sent from the same device). 

The results of \Fig{fig:res} concerns two BSs: one with a known anomaly (lying between the two vertical red lines of \Fig{fig:res}a), the other with no known issues (\Fig{fig:res}b). Note that we have the opinion of Sigfox's experts only about these two BSs, yet we lack labels for the rest of the BSs. According to the experts, network's operation has been normal during January 2017, thus, for both reference BSs the learning phase was performed during that period. We used a Random Forest regressor \cite{breiman2001random} as implemented in \cite{scikit-learn}. The testing phase was performed independently on a daily basis for the subsequent $4$ months. In other words, and this concerns all our results, we report the raw outcome of the independent daily detection for anomalies without applying any post-processing that could certainly improve the performance of most methods. \Fig{fig:res} refers to the testing phase and shows the evolution of the observed number of received messages (blue) and the evolution of the confidence levels (orange) with $\delta = 0.01$. 

\begin{figure}[!t]
\subfigure[Evolution of confidence levels for the anomalous BS.]{\label{fig:a}%
	\centerline{\includegraphics[width=1\linewidth, viewport=20 32 550 350, clip=true]{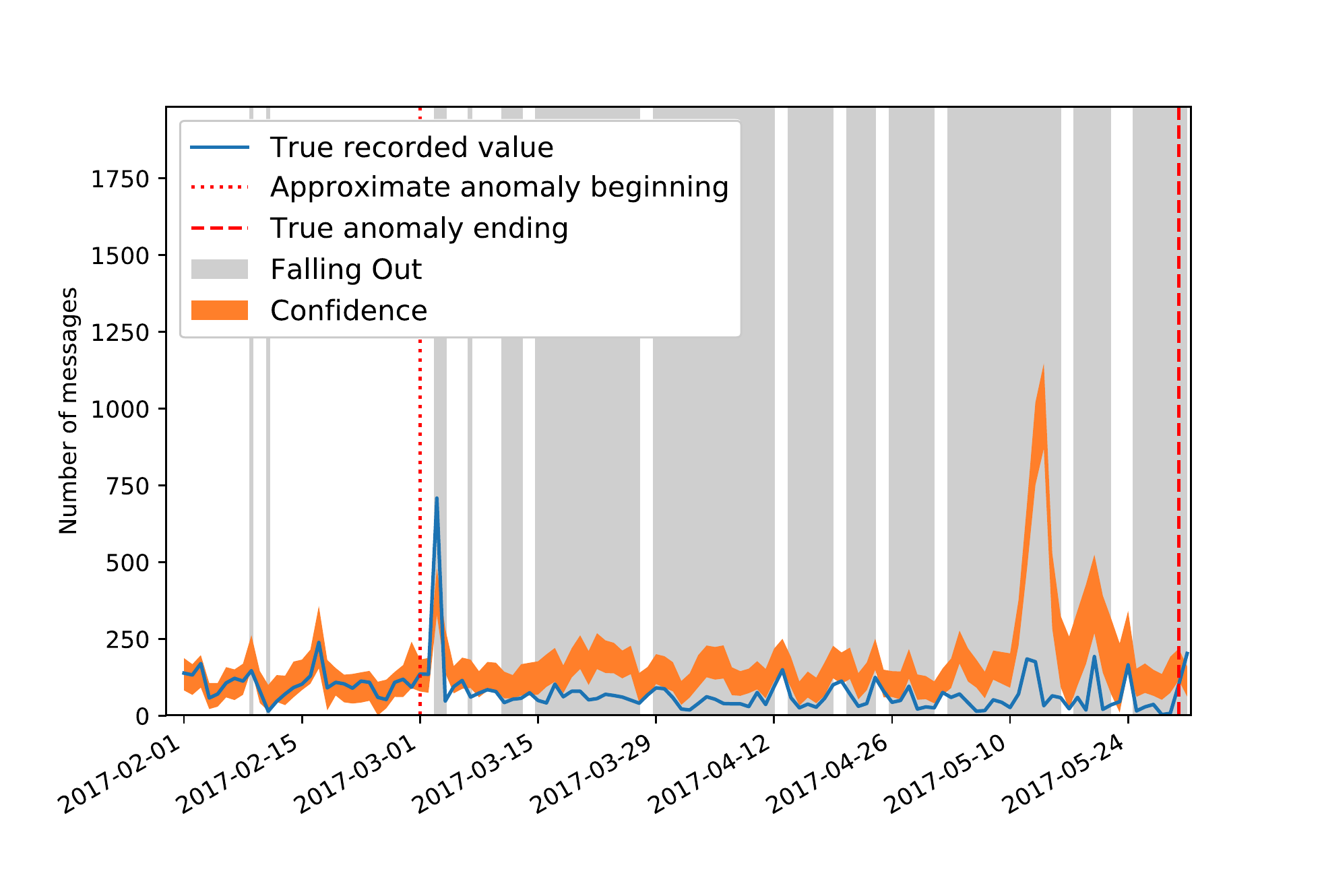}}%
}
\subfigure[Evolution of confidence levels for the normal BS.]{\label{fig:b}%
	\centerline{\includegraphics[width=1\linewidth, viewport=20 32 550 350, clip=true]{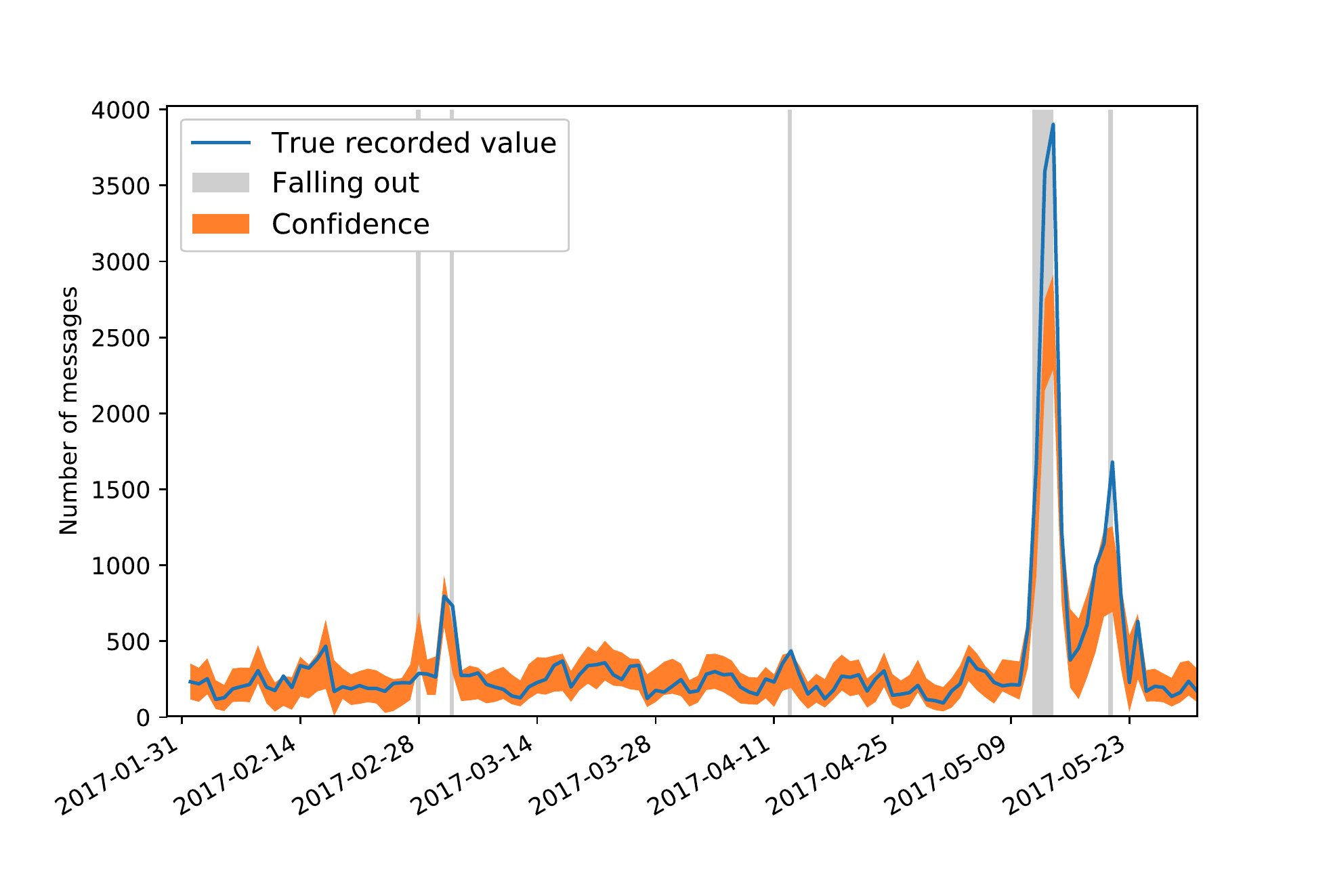}}%
}
\subfigure[ROC curves for bilateral and unilateral confidence levels. Comparison with stationary counting processes.]{\label{fig:c}%
  \centerline{\includegraphics[width=0.8\linewidth, viewport=20 10 435 310, clip=true]{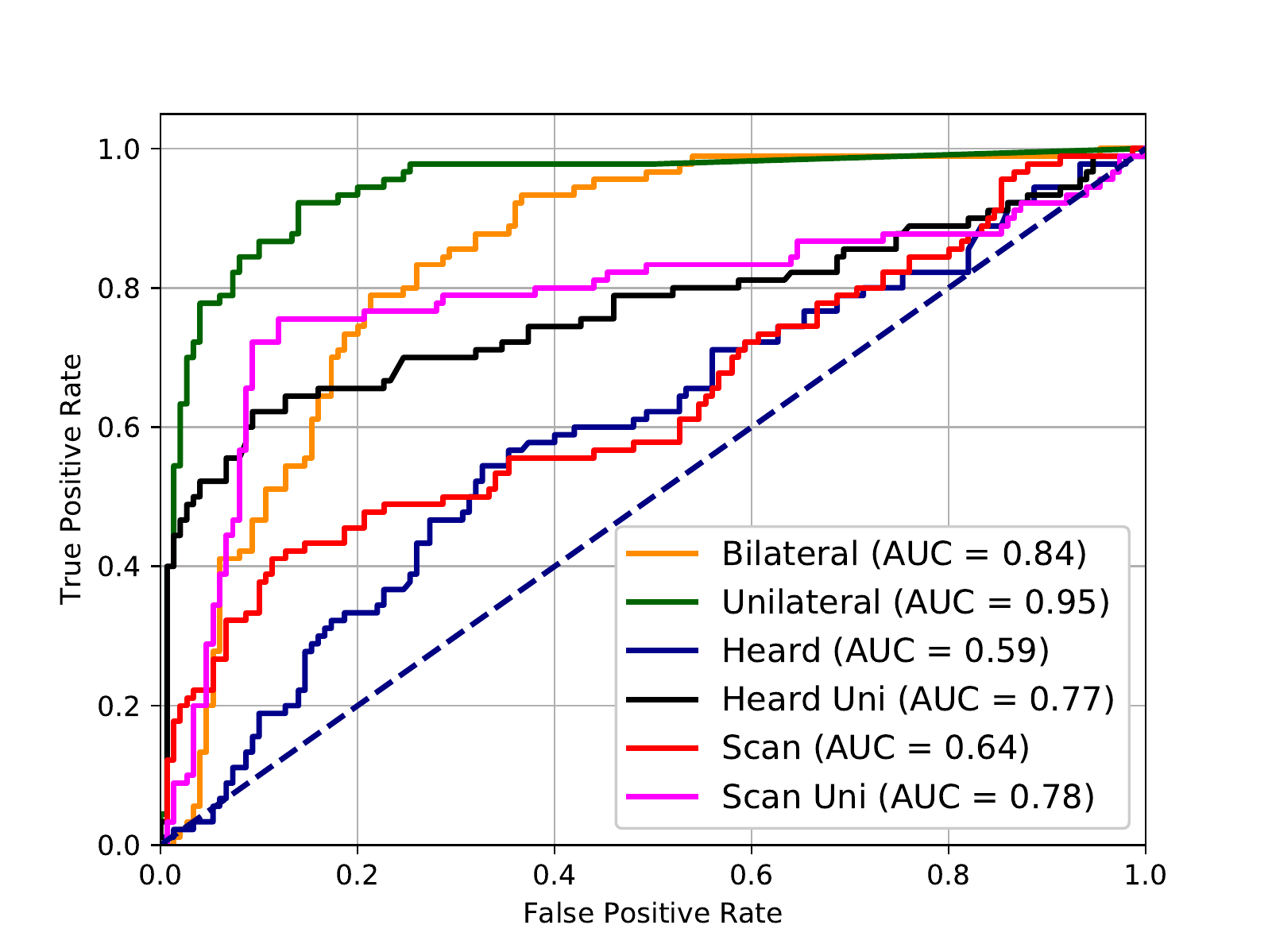}}%
}
\caption{Results on two BSs of the considered IoT network. \textbf{(a{--}b)}:~The true number of  messages received by the abnormal BS and the normal one over the testing period. The yellow area corresponds to the predicted confidence region for the number of received messages. \textbf{(c)}:~The ROC curves and their AUC of the proposed method using a bilateral (orange) and an unilateral (green) anomaly score. Comparison with Heard's \cite{heard2010bayesian} and Scan Statistics \cite{wang2014locality, priebe2005scan}.}
\label{fig:res}
\end{figure}

The results, especially the ROC curves, show that our method (bilateral and unilateral variations) outperforms the compared approaches. As expected, the tests with unilateral score were always better than those with bilateral, for all the detection methods. \Fig{fig:res}b suggests that our model is well-suited for the analysis of the BSs in normal network operation. Indeed, the false positive rates are pretty low in that case. 

To prove this latter idea, we applied our method on $5$ other BSs which are located close to each other. The predicted confidence region around the predicted value are plotted for each BS in \Fig{fig:other}. Once again, we can see that the observed number of received messages falls out of the confidence level very few times. The fact that our method reports long anomalies for many BSs during May 11\,-\,25, may be a sign that retraining is needed. However, for the third BS, the observed value is persistently very low compared to the predicted confidence intervals, which is a stronger indication for anomalous behavior during that period of time.

\begin{figure}[!t]
\includegraphics[scale=0.589, viewport=0 50 1000 490, clip=true]{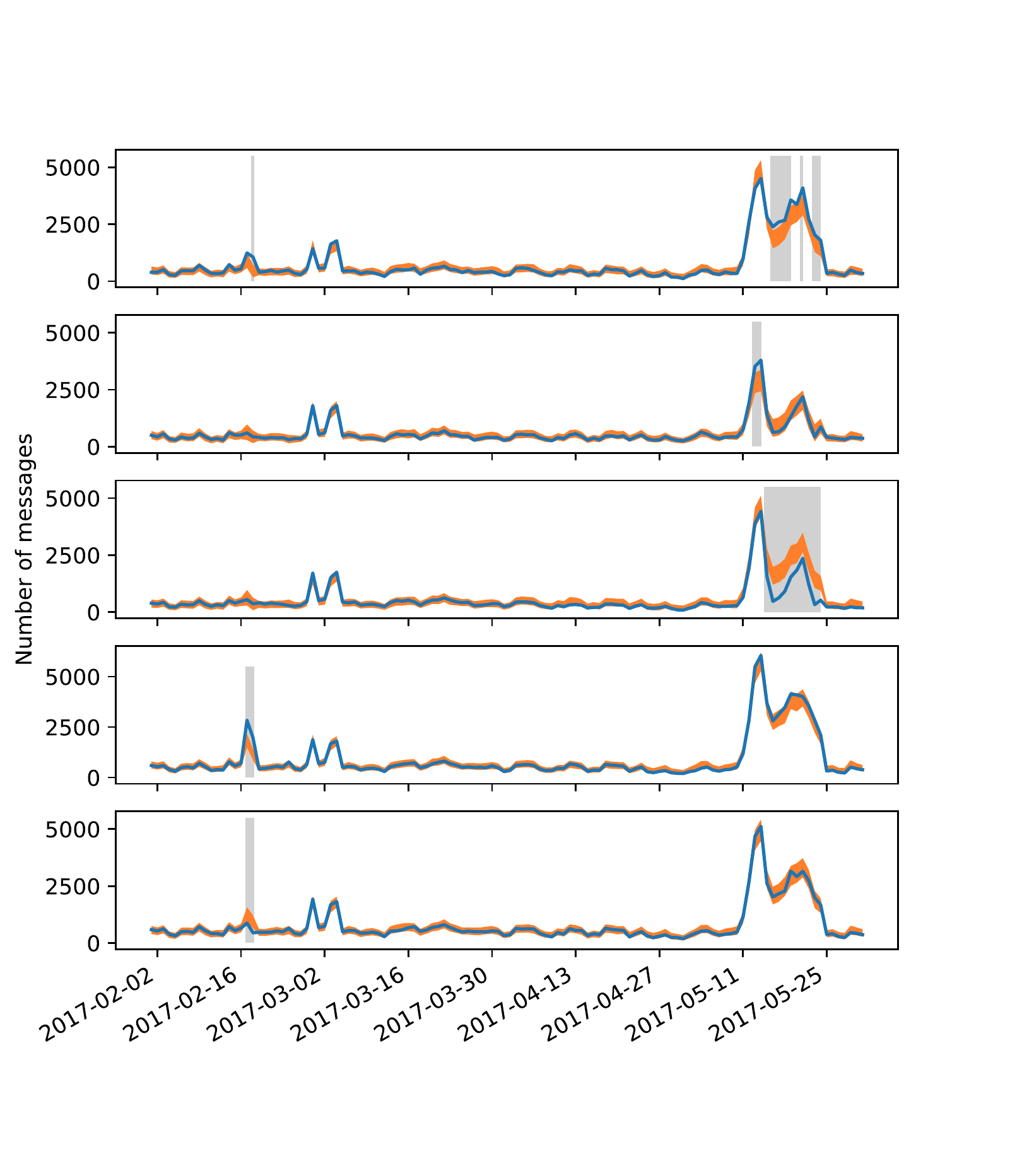}%
\vspace{-4mm}
\caption{Evolution of the predicted confidence regions for five 
BSs.}\label{fig:other}
\end{figure}

\subsection{Simulated dataset}\label{sec:simulation}
Aiming to extend the scale of our experimental study, we developed a data generator that simulates network communication activity. To be consistent with the previous experiment of \Sec{sec:sigfox_data}, we keep the nodes' spatial arrangement of Sigfox network. We propose the following simulation process:
\begin{itemize}
\item[S1)]\emph{Sample the spatial network structure:} Draw $N$ node (\ie analogous to BSs) locations, according to a mixture model $\mathcal{M}$ of $K$ (bivariate) Gaussian distributions.

\item[S2)]\emph{Sample an event/fingerprint:} First generate a transmission location (analogous to a device) $\ell\sim\mathcal{M}$, as in Step~1. Then for each node, let its location $x$, draw a Bernoulli with a parameter inversely proportional to the distance $d(x,\ell)$. In our experiments, we set the Bernoulli parameters to be equal to $\exp(-\frac{1}{\sigma_x}d(x,\ell))$, where $\sigma_x$ is a \emph{location-dependent visibility} parameter that controls the density of the graph.

\item[S3)]\emph{Generate an event stream (clique stream):} At each timestamp $t$, draw $n_t$ event fingerprints by applying Steps~2-3, where $n_t$ may be constant, or random, over time.

\item[S4)]\emph{Simulate anomalies through non-stationarity:} The simplest way to simulate non-stationarity is to draw the total number of events at each timestamp according to a non-stationary process. To increase the complexity of the phenomena, one may also let the component (or cluster) proportion of the mixture in $\mathcal{M}$  to vary at each timestamp. That would correspond to the case where devices appear following a non-uniform spatial distribution. To simulate anomalies for a node, it is sufficient to let vary the visibility parameter $\sigma_x$ associated to the node's location.
\end{itemize}

In order to demonstrate the robustness of our method, we apply the above generative process in three simulations with different `complexity', whereas sharing the following properties: 
\begin{itemize}
\item S1: $N=100$ communication nodes are drawn, for which, $T=1100$ timestamps are then simulated. The number of Gaussian distributions are fixed to $K=10$.
\item S2: The same set of constant visibility parameters is used.
\item S3: The first $500$ timestamps are treated as the training stream, while the rest correspond to the test stream.
\item S4: A single arbitrary node is chosen to be anomalous. For which, $4$ anomaly time intervals are simulated: $[750,\,800]$, $[850,\,900]$, $[950,\,1000]$ and $[1050,\,1100]$. Each of these intervals imitates an anomalous behavior at a different scale; this is achieved by decreasing only the visibility parameter $\sigma_x$ associated with the anomalous node.
\end{itemize}

\begin{figure*}[t]
\subfigure[Activity of the anomalous node during \Experiment{1}.] {\label{fig:easy_a}%
	\includegraphics[width=0.325\linewidth, viewport=20 10 435 310, clip=true]{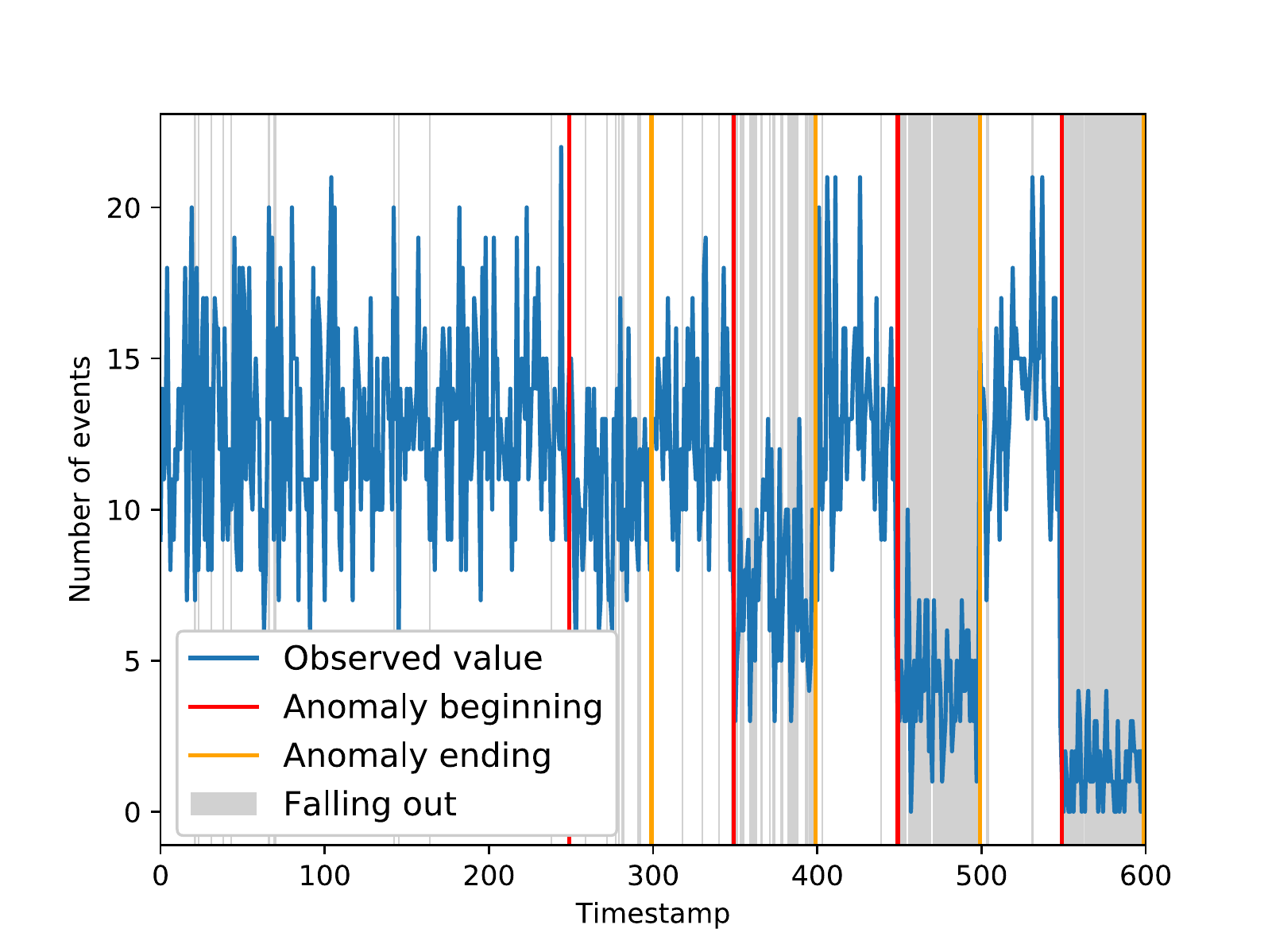}%
}%
\hspace{0.5em}
\subfigure[Activity of the anomalous node during \Experiment{2}.]{\label{fig:middle_a}%
	\includegraphics[width=0.325\linewidth, viewport=20 10 435 310, clip=true]{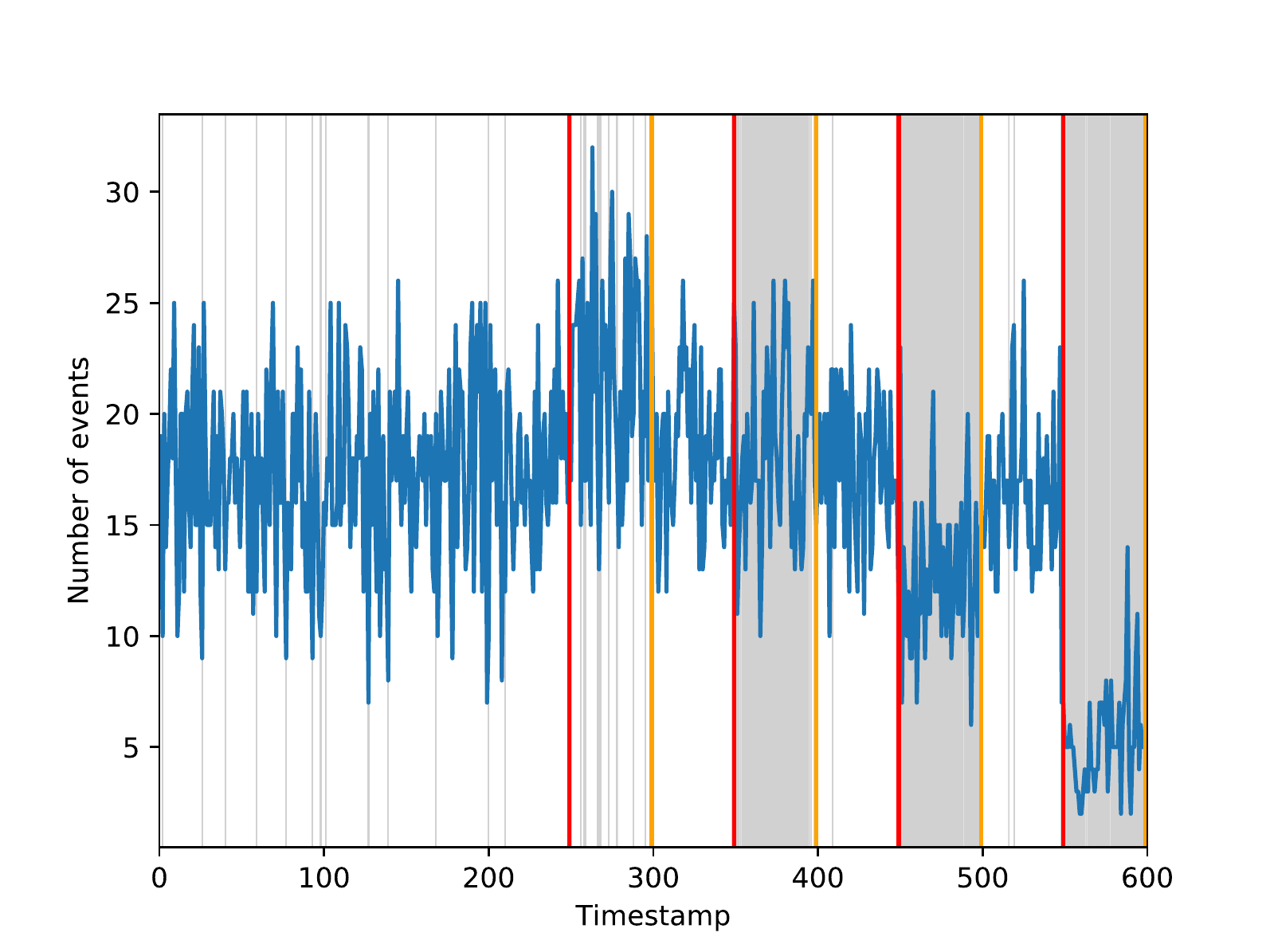}%
}%
\hspace{0.5em}
\subfigure[Activity of the anomalous node during \Experiment{3}.]{\label{fig:hard_a}%
	\includegraphics[width=0.325\linewidth, viewport=20 10 435 310, clip=true]{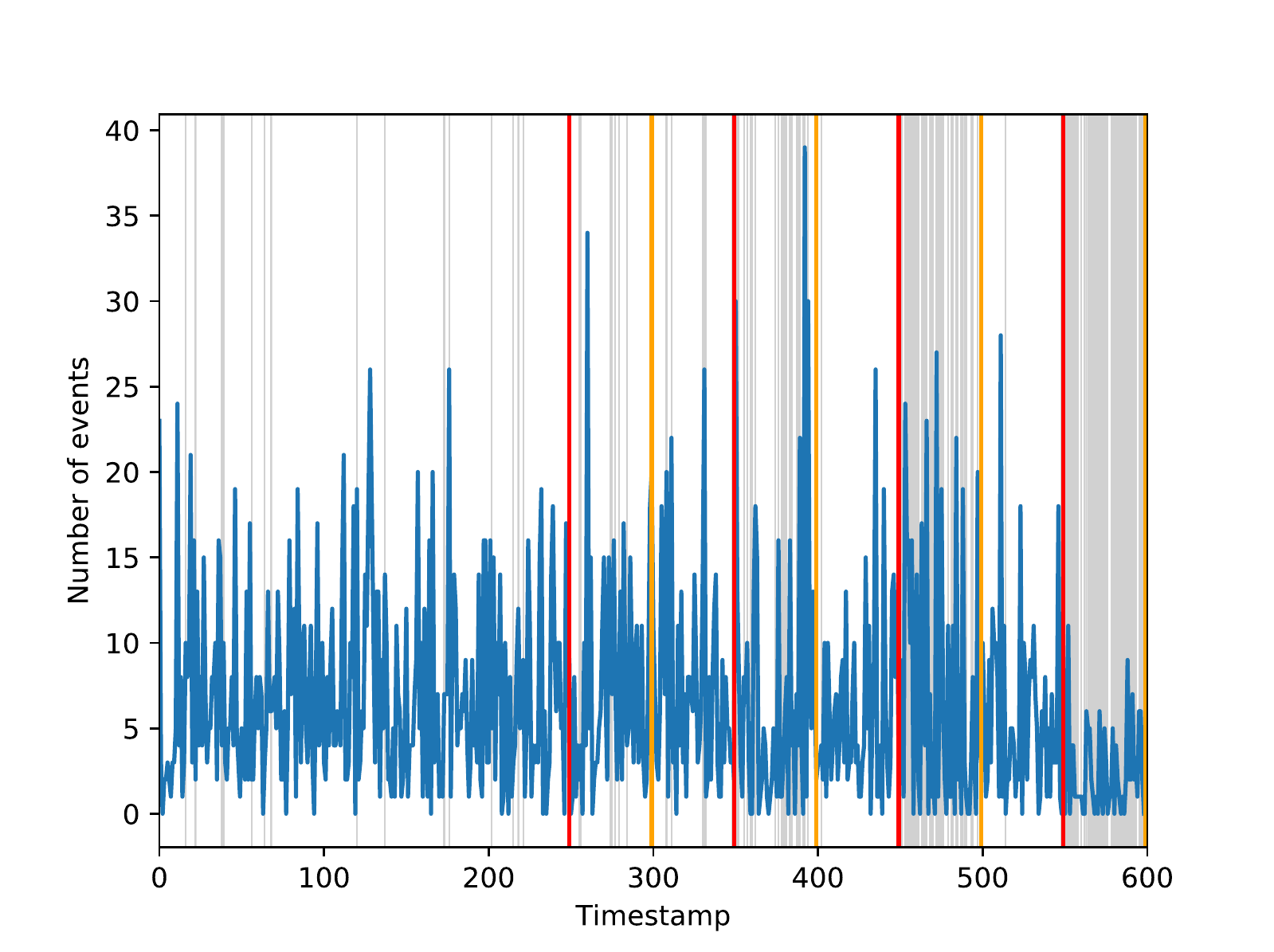}%
}
%
\vspace{0.1cm}
\subfigure[ROC curves for \Experiment{1}.]{\label{fig:easy_b}%
   \includegraphics[width=0.325\linewidth, viewport=20 10 435 310, clip=true]{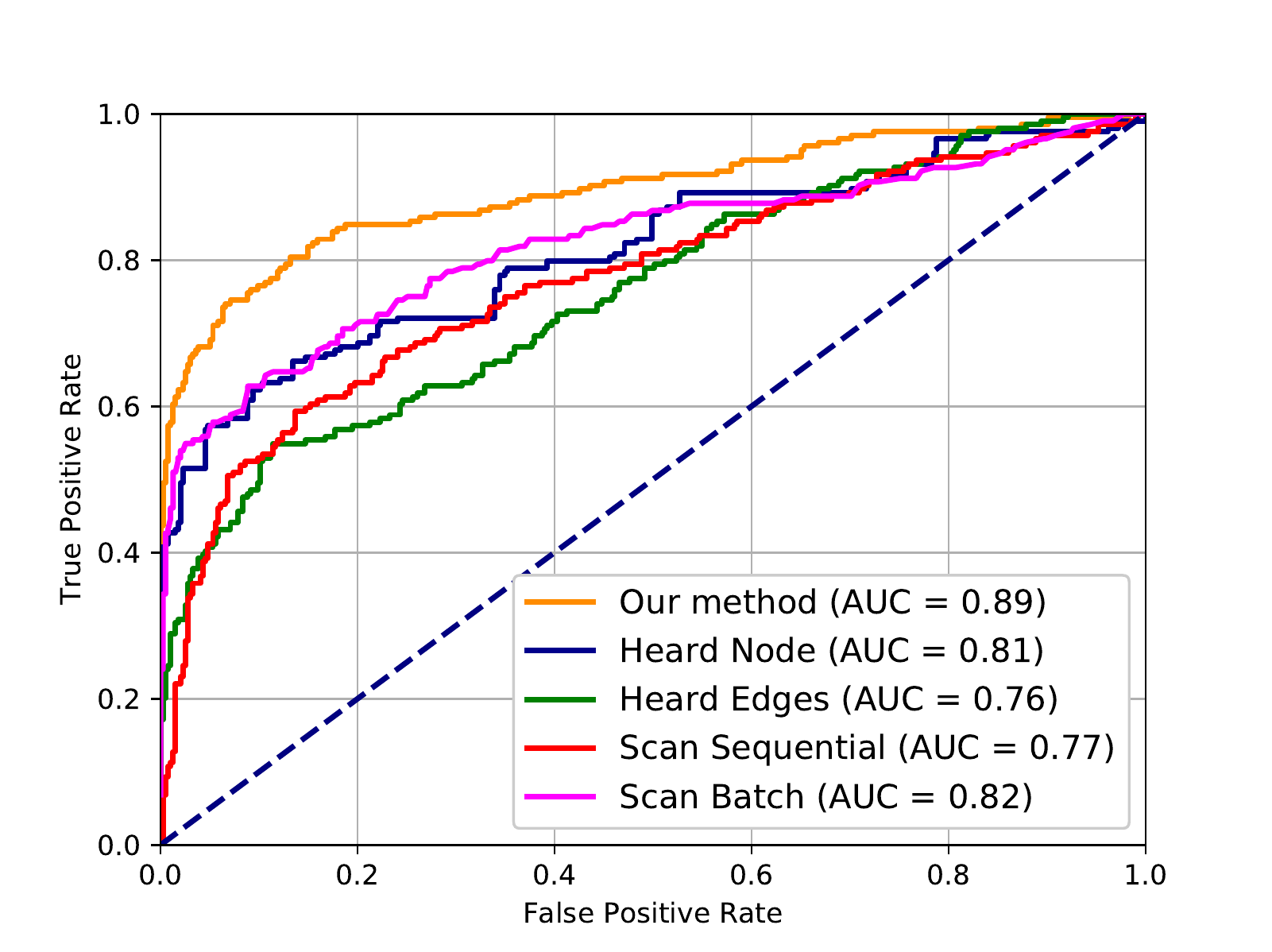}%
}%
\hspace{0.5em}
\subfigure[ROC curves for \Experiment{2}.]{\label{fig:middle_b}%
  \includegraphics[width=0.325\linewidth, viewport=20 10 435 310, clip=true]{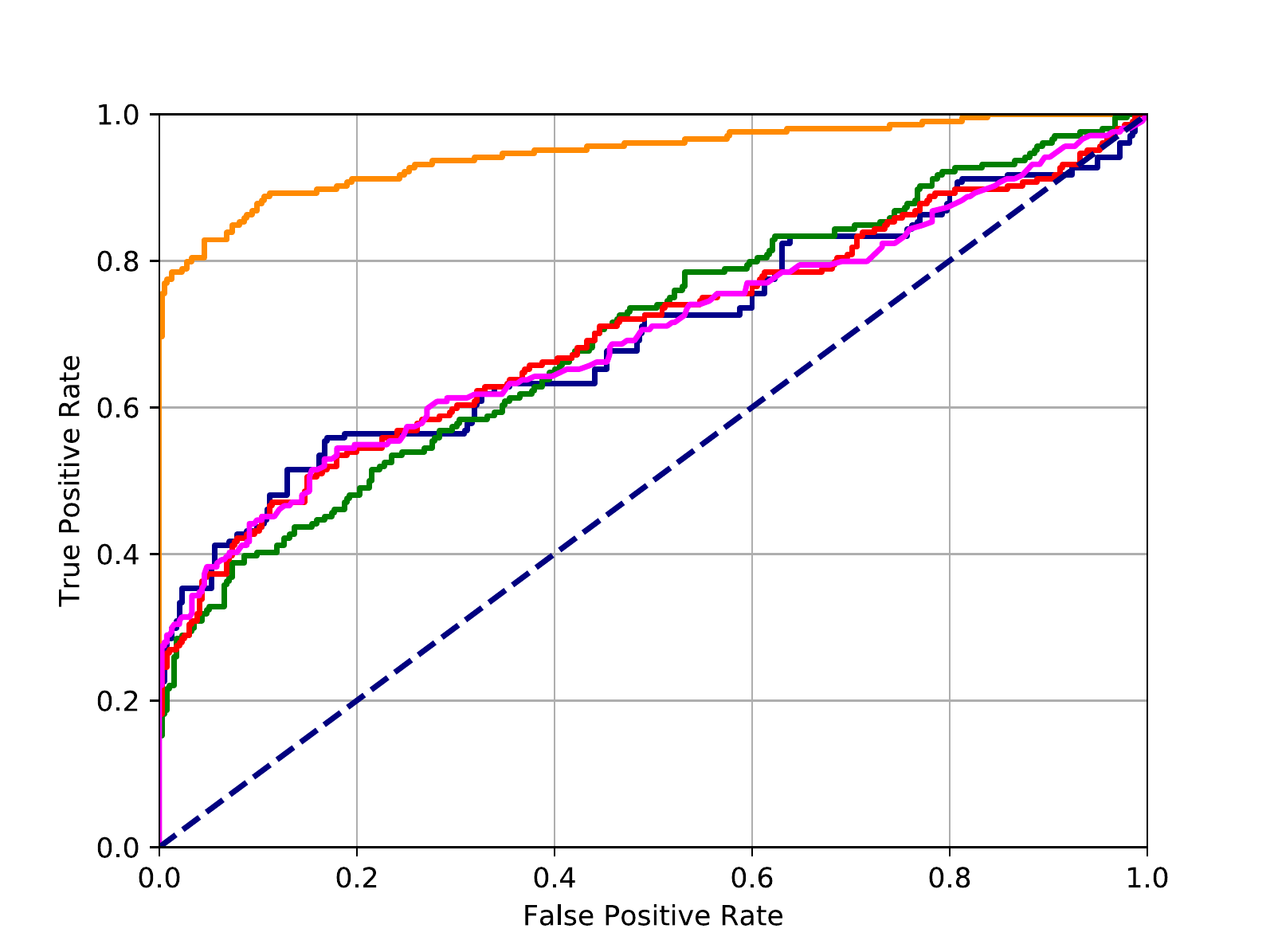}%
}%
\hspace{0.5em}
\subfigure[ROC curves for \Experiment{3}.]{\label{fig:hard_b}%
	\includegraphics[width=0.325\linewidth, viewport=20 10 435 310, clip=true]{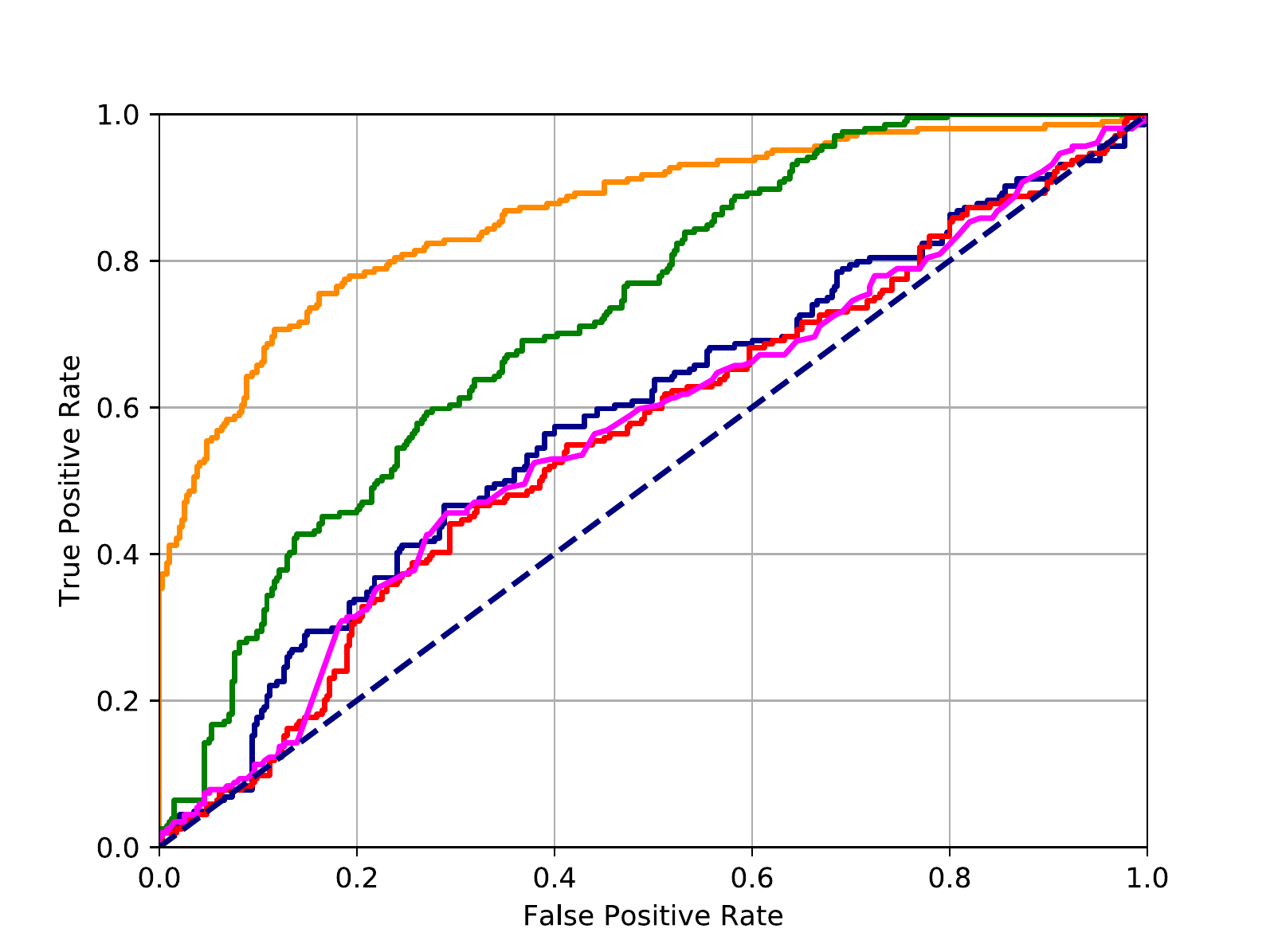}%
}
\caption{Results on simulated communication streams. The columns report results related to the three generated streams in order, \Experiment{1,\,2,\,3}, respectively. \textbf{(a-c)} The time-series of the number of messages (\ie communication events) received by the anomalous node during the testing period of each experiment. Four anomaly intervals are simulated for the same fixed node that is chosen to act as anomalous, in the time intervals [750,\,800], [850,\,900], [950,\,1000], and [1050,\,1100]. The beginning and the end timestamps of each anomalous interval are indicated with red and orange vertical lines, respectively, in the plots. 
\textbf{(d-f)} the ROC curves of the node-level outlier detection task for the three synthetic streams.}
\label{fig:sim_results}
\end{figure*}

Our three experiments (\Experiment{1}-3) differ in their complexity regarding the stationarity of the respective time-series, \ie number of events in which the \emph{anomalous node} (the node that at some point develops an anomalous behavior) participates in each experiment. The top row of \Fig{fig:sim_results} presents the time-series of the test streams. The timestamps of the beginning and the end of each simulated anomalous behavior are also indicated in the plots with orange and red vertical lines, respectively. 

In \Experiment{1} (\Fig{fig:sim_results}a,\,d), the process is perfectly stationary: at each timestamp, exactly $100$ events are generated with the same process. In \Experiment{2} (\Fig{fig:sim_results}b,\,e), the total number of events participated at each timestamp remains $100$, with the difference that a Dirichlet random variable of order $K$ is drawn, with parameters all equal to $1$. This corresponds to the mixing variable (\ie proportion) for the components of $\mathcal{M}$. The last one, \Experiment{3} (\Fig{fig:sim_results}c,\,f), is meant to be more difficult: it uses the Dirichlet mixing as well, however, at each timestamp of anomaly, the total number of generated events is increased. This is a `tricky' setting for the bare human eye as the time-series of interest `looks' stationary (\Fig{fig:sim_results}c) although there are actual changepoints in the node behavior.

For all three experiments, the threshold value that needs to be fixed for building the confidence levels was estimated using cross-validation (see details in \Sec{subsec:methodo}). We fix the acceptable false positive rate at $0.05$. The light gray vertical lines in the background of the top row plots in \Fig{fig:sim_results} indicate the timestamps at which the observed values fall out of the confidence levels, and as such they can be spotted as outliers. 

The bottom row of \Fig{fig:sim_results} shows the ROC curves of the node-level outlier detection task for the three synthetic streams. The competitors are the same as those of the experiment on real data (\Fig{fig:res}), but here only bilateral scores are plotted. In addition, here we employ a second version of both Heard's and Scan Statistics methods. The \emph{Heard Edges} fits an homogeneous Poisson process on each edge independently; each edge denoting the number of common messages received by two nodes. In this case, the node anomaly score is the sum of $p$-values of its edges. Moreover, the \emph{Scan Batch} method simply outputs anomaly scores equal to the normalized deviation of the statistic of interest (see \Sec{sec:competitors}) from a (unordered) batch of the training stream, hence without sequential analysis.

All the reported results indicate that the proposed method outperforms clearly its competitors. As expected by its design, our approach is shown to be robust to the non-stationarity introduced at arbitrary timestamps during our simulations. The performance of all other methods seems to decrease fast with the increase of non-stationarity (\ie behavior complexity). An important closing remark is to remind that, in our evaluation, we have been applying anomaly detection independently on each day. As this work is related to the detection of changepoints in nodes' behavior rather than instantaneous anomalies, post-processing (such as filtering) of the raw detection outcome could increase the accuracy of most methods.

\section{Conclusions}\label{sec:conclusions}

In this paper we presented a probabilistic framework for node-level anomaly detection in communication networks. We went beyond the aggregated representations that the existing literature has used to model the communication activity. Instead, we modeled such activity as a \emph{clique stream} where each event creates an instantaneous clique among the communicating nodes of the graph. The detection approach we proposed is to infer the conditional probabilities of cliques to be generated. This allowed the derivation of node anomaly scores which are efficient in detecting when the communication volume deviates from the `normal' behavior (estimated using a training stream of normal communication behavior), while also being statistically interpretable. We applied our method on both real-world and synthetic sensor network data, and demonstrated that it outperforms other probabilistic approaches found in the related literature.

As future work, there is room to further improve the accuracy of the statistical modeling, consider that events can create more complex structures of connected nodes than cliques, include dynamics coming from (dis)appearance of nodes, and finally bring our method closer to the link prediction or structure inference tasks, using for instance the learned conditional probabilities. 

\section*{Acknowledgments}
Part of this work was funded by the IdAML Chair hosted at ENS-Paris-Saclay.

\balance


\newpage
\section*{Appendix}
\noindent\textbf{\Theorem{THME}.} \textit{See \Sec{subsec:thme}}.
\begin{proof}
In the following, we note $\widehat{\eta}_{n_0}(\cdot):=\widehat{\eta_j}(\,\cdot\,;\ X_1^0,\ldots,X^0_{n_0})$. We also assume the distribution described in \Theorem{THME}: $\forall i=1,\ldots,n_0$, $X_i\underset{\iid}{\sim} \Prob_{\!\!X^0}$ and $\forall j=1,\ldots,n_t$, $X_j\underset{\iid}{\sim} \Prob_{\!\!X}$. With an abuse of notation, $\Prob_{\!\!X^0}$ and $\Prob_{\!\!X}$ also refer to the marginal distributions. 
Using the triangle inequality, we get:
\begin{align*}
&\left|M^{(j)}_t- \sum_{i=1}^{n_t}\widehat{\eta}_{n_0}(X^{(-j)}_i)\right| \\ 
&\leq \left|M^{(j)}_t - \sum_{i=1}^{n_t}\eta^*(X^{(-j)}_i)\right|+\left|\sum_{i=1}^{n_t}(\eta^*(X^{(-j)}_i)-\widehat{\eta}_{n_0}(X^{(-j)}_i))\right| \\
&\leq \left|M^{(j)}_t - \sum_{i=1}^{n_t}\eta^*(X^{(-j)}_i)\right| \\
& \quad + \Bigg|\sum_{i=1}^{n_t}(\eta^*(X^{(-j)}_i)-\widehat{\eta}_{n_0}(X^{(-j)}_i))- \\
    & \quad \quad \quad\quad \quad\Exp_{\!X^0\otimes X}\left[\sum_{i=1}^{n_t}(\eta^*(X^{(-j)}_i)-\widehat{\eta}_{n_0}(X^{(-j)}_i))\right]\Bigg| \\ 
& \quad + \underbrace{\left|\Exp_{\!X^0\otimes X}\left[\sum_{i=1}^{n_t}(\eta^*(X^{(-j)}_i)-\widehat{\eta}_{n_0}(X^{(-j)}_i))\right]\right|}_{(*)}.
\end{align*}
In the above, $\Exp_{\!X^0\otimes X}$ means that the expectation is taken with distribution $\Prob_X^0$ for $\S_0$ and $\Prob_{\!\!X}$ for $\S_t$.
Using Jensen's inequality and the fact that all $X_i^{(-j)}$ are \iid, we get:
\begin{align*}
\raisebox{1mm}{$_{(*)}$} &\leq \Exp_{\!X^0\otimes X}\left[\left|\sum_{i=1}^{n_t}\eta^*(X^{(-j)}_i)-\widehat{\eta}_{n_0}(X^{(-j)}_i)\right|\right] \\
&\leq n_t \Exp_{\!X^0\otimes X}\left[\left|\eta^*(X)-\widehat{\eta}_{n_0}(X)\right|\right] \\
 & \leq n_t\Exp_{\!X^0}\left[\int \left|\eta^*(\x)-\widehat{\eta}_{n_0}(\x)\right|\Prob_{\!\!X}(\text{d}\x)\right]. \\
 & \text{Using Cauchy-Schwarz inequality:} \\
 & \leq n_t\Exp_{\!X^0}\Bigg[\sqrt{\int \left(\eta^*(\x)-\widehat{\eta}_{n_0}(\x)\right)^2\Prob_{\!\!X^0}(\text{d}\x)} \times \\ 
 & \quad \quad \quad \quad \quad \quad \times \sqrt{\int\frac{\Prob_{\!\!X}(\x)}{\Prob_{\!\!X^0}(\x)}\Prob_{\!\!X^0}(\text{d}\x)}\Bigg]. \\
 & \text{With the same support hypothesis:} \\
 & = n_t\Exp_{\!X^0}\left[\sqrt{\int \left(\eta^*(\x)-\widehat{\eta}_{n_0}(\x)\right)^2\Prob_{\!\!X^0}(\text{d}\x)}\right]. \\
 & \text{Using Jensen inequality:} \\
 & \leq \sqrt{\Exp_{\!X^0\otimes X^0}\left[(\eta^*(X)-\widehat{\eta}_{n_0}(X))^2\right]} := r(n_0).
\end{align*}
Due to the assumption of weak consistency, $r(n_0)$ converges to zero, so as $(*)$.  
In the following, we assume that $r(n_0)<s$ which is always true after a certain rank. We note $\tilde{s}(n_0)=s-r(n_0)$.
Back to the first inequality of the proof, we get $\forall k\in\left(0,\tilde{s}(n_0)\right)$:
\begin{align*}
&\Prob(|M^{(j)}_t - \sum_{i=1}^{n_t}\widehat{\eta}_{n_0}(X^{(-j)}_i)|>s)\\
&\leq \Prob(|M^{(j)}_t - \sum_{i=1}^{n_t}\eta^*(X^{(-j)}_i)|>k) \\
&  + \Prob\Bigg(\Bigg|\sum_{i=1}^{n_t}(\eta^*(X^{(-j)}_i)-\widehat{\eta}_{n_0}(X^{(-j)}_i))\\
    & \quad - \Exp_{\!X^0\otimes X}\left[\sum_{i=1}^{n_t}(\eta^*(X^{(-j)}_i)-\widehat{\eta}_{n_0}(X^{(-j)}_i))\right]\Bigg|>\tilde{s}(n_0)-k\Bigg).
\end{align*}
This is due to the fact that $k+\tilde{s}(n_0)-k+r(n_0)=s$. 
We now need to find an upper bound on the two elements of the right-hand side of the previous inequality.
The first element of the sum is easily bounded using Jensen's inequality:
\begin{equation*}
\Prob\left(|M^{(j)}_t - \sum_{i=1}^{n_t}\eta^*(X^{(-j)}_i)|>k\right)\leq 2\,\exp\left(-\frac{2k^2}{n_t}\right)\!.
\end{equation*}
For the second element, we must note that $\sum_{i=1}^{n_t}(\eta^*(X^{(-j)}_i)-\widehat{\eta}_{n_0}(X^{(-j)}_i))$ is a function, with bounded differences, of $n_0+n_t$ independent random variables. Thus, we can apply McDiarmid's inequality to bound our probability:
\begin{align*}
\Prob&\Bigg(\Bigg|\sum_{i=1}^{n_t}(\eta^*(X^{(-j)}_i)-\widehat{\eta}_{n_0}(X^{(-j)}_i)) \\
& \quad  - \Exp_{\!X^0\otimes X}\left[\sum_{i=1}^{n_t}(\eta^*(X^{(-j)}_i)-\widehat{\eta}_{n_0}(X^{(-j)}_i))\right]\Bigg|>\tilde{s}(n_0)-k\Bigg) \\ 
& \leq 2\exp\left(-2\frac{\left(\tilde{s}(n_0)-k\right)^2}{4n_t+n_t^2n_0\kappa^2(n_0)}\right).
\end{align*}

\noindent This implies that:
\begin{align*}
\Prob(|M^{(j)}_t - &\sum_{i=1}^{n_t}\widehat{\eta}_{n_0}(X^{(-j)}_i)|>s) \\
&\leq 2\exp\left(-\frac{2k^2}{n_t}\right)+2\exp\left(-\frac{2\left(\tilde{s}(n_0)-k\right)^2}{4n_t+n_t^2n_0\kappa^2(n_0)}\right).
\end{align*}
This is true $\forall k\in\left(0,\tilde{s}(n_0)\right)$. Furthermore, since $\tilde{s}(n_0)\underset{n_0\rightarrow \infty}{\longrightarrow} s$ and $n_0\kappa^2(n_0)\underset{n_0\rightarrow \infty}{\longrightarrow} 0$, passing to the limit on both side of the previous equation, we get our final result:
\begin{equation*}
\begin{split}
\underset{n_0\rightarrow\infty}{\lim}\Prob(|&M^{(j)}_t - \sum_{i=1}^{n_t}\widehat{\eta}_{n_0}(X^{(-j)}_i)|>s) \\
& \leq \underset{k\in[0,s]}{\min} \left\{2\exp\left(-\frac{2k^2}{n_t}\right)+2\exp\left(-\frac{(s-k)^2}{2n_t}\right)\right\}.
\end{split}
\end{equation*}
\end{proof}

\end{document}